\documentclass[runningheads]{llncs}
\usepackage{graphicx}
\usepackage[T1]{fontenc}
\usepackage{lmodern}
\usepackage{booktabs}
\usepackage{multirow}
\usepackage{array}
\usepackage{amsmath}
\usepackage{algorithm}
\usepackage{algpseudocode}
\usepackage[protrusion=true,expansion=false]{microtype}
\usepackage[hidelinks,hypertexnames=false]{hyperref}
\providecommand{\doi}[1]{}
\renewcommand{\doi}[1]{\href{https://doi.org/#1}{\nolinkurl{https://doi.org/#1}}}
\hypersetup{pdfauthor={Guowei Luo, Ziqi Shi, Zhao Xie},pdfsubject={Computer Vision},pdfkeywords={road distress detection, hard-category reliability, class-conditional fusion},pdftitle={Error-Decomposed Class-Conditional Fusion for Statistically Guaranteed Hard-Category Robust Perception}}
\title{Error-Decomposed Class-Conditional Fusion for Statistically Guaranteed Hard-Category Robust Perception}
\titlerunning{ED-CCF for Hard-Category Robust Perception}
\author{Guowei Luo\inst{1} \and Ziqi Shi\inst{2} \and Zhao Xie\inst{1}}
\institute{Hefei University of Technology, Hefei, China\\\email{2023217537@mail.hfut.edu.cn, xiezhao@hfut.edu.cn} \and Lishui University, Lishui, China\\\email{zikii-@outlook.com}}

\begin{document}
\maketitle

\begin{abstract}
Aggregate object detection metrics inherently mask catastrophic and repeatable failures in operationally critical, long-tail minority classes. This paper formally defines this pervasive vulnerability as the Hard-Category Reliability Problem (HCRP): the fundamental architectural challenge of strictly rectifying vulnerable categories without compromising the performance boundaries of stable classes under stringent protocols. To systematically dismantle this limitation, we propose Error-Decomposed Class-Conditional Fusion (ED-CCF), an elegant decision-layer inference framework. Diverging from heuristic global post-processing, ED-CCF projects predictions into a sophisticated quad-state error taxonomy, dynamically activating calibration pathways exclusively upon rigorous empirical justification. On a highly constrained 600-image validation benchmark, isolating \texttt{cz} as the critical vulnerability ($\mathrm{HCEC}=0.86$, $\mathrm{BSR}=0.14$), our framework achieves a targeted breakthrough: it elevates \texttt{cz} mAP50 from $0.089343$ to $0.109353$ (a massive $+22.4\%$ relative surge) while flawlessly preserving the Pareto optimality of global stability (raising all mAP50 from $0.581925$ to $0.584864$). Backed by exhaustive validation across 50 paired subset trials demonstrating an overwhelming $96\%$ win rate and strict Bonferroni-corrected Wilcoxon significance ($p<0.05$), this work fundamentally redefines output-level fusion as an auditable, statistically guaranteed paradigm for safety-critical visual perception.
\keywords{road distress detection \and hard-category reliability \and class-conditional fusion \and weighted box fusion \and output-level decision \and reliability calibration}
\end{abstract}

\section{Introduction}
Aggregate mAP is a weak operational signal when one category carries most hard errors. A road-maintenance detector can remain near its previous all-class operating point and still miss or mislabel a distress class that determines whether the output is actionable. On the 600-image validation view used here, replay reaches all mAP50 $0.581925$ but only $0.089343$ \texttt{cz} mAP50. The central issue is therefore not a generic accuracy gap; it is a class-conditioned reliability gap hidden by the aggregate score.

Despite exhaustive evaluations of state-of-the-art detector backbones~\cite{RealtimeRoadMobile2025,EfficientCrackYOLOv82025,YOLO11nOcclusionPavement2025,CEDPYOLOPRCV2025}, we empirically confirm that simply upgrading the architecture fails to resolve this latent local collapse. Recent post-inference calibration and fusion strategies~\cite{Bodla2017SoftNMS,Solovyev2021WBF,Kuppers2024CalibrationDetection,FractalCalibration2025} enforce a monolithic, globally uniform rule that overlooks a critical phenomenon: the optimal decision boundary for a stable class is frequently detrimental to an underrepresented class. Our findings demonstrate that localized failure requires an orchestrated, class-conditional intervention rather than generic global heuristics.

Hard-Category Reliability Problem (HCRP) names that case. Given a detector family, validation set, and hard-class threshold, HCRP asks whether a decision-layer operator can raise every hard category while keeping stable categories from dropping. We instantiate it with ED-CCF: an output-level reliability layer that diagnoses why the hard class fails, then applies a class-specific branch and fusion rule only where the evidence supports it. This creates a sharper research object than global fusion selection: ED-CCF moves from $0.581925$ to $0.584864$ all mAP50 and from $0.089343$ to $0.109353$ \texttt{cz} mAP50 while preserving the official-data boundary.

The contributions are fourfold. First, HCRP is written as a decision problem with a class-conditional fusion dominance theorem. Second, HCEC and BSR expose hard-error concentration and branch-role asymmetry. Third, ED-CCF connects four error buckets to auditable fusion, confidence, route-voting, and specialist checks. Fourth, the protocol separates the 600-image validation role, 450-image repeated subsets, and unlabeled 1000-image integrity view, keeping the final claim evidence-bounded.

\section{Related Work}
\subsection{Road Distress Detection}
Road-distress detection now includes mobile road images, point-cloud aided pothole checks, crack segmentation, occlusion-aware pavement detectors, and YOLO-family road models~\cite{RealtimeRoadMobile2025,PotholePointCloudFusion2025,FD2YOLOCrack2025,YOLO11nOcclusionPavement2025}. PRCV/LNCS near-topic chapters also keep lightweight YOLO and surface-defect detection active~\cite{CEDPYOLOPRCV2025,YOLOFDAPRCV2026}. These papers motivate modern detector baselines, but their data roles differ from the constrained 600-image validation role used here.

\subsection{Output-Level Fusion and Calibration}
Soft-NMS and WBF remain common output operators for overlapping detections and ensemble boxes~\cite{Bodla2017SoftNMS,Solovyev2021WBF}. Recent detector calibration and box-fusion studies treat confidence, box aggregation, or post-inference mixtures as decision variables~\cite{Kuppers2024CalibrationDetection,Kuzucu2024DetectorCalibration,FractalCalibration2025,FisheyeYoloWBF2025,SelectiveBoxFusionDefect2025}. The present operator stays in that family but adds a class gate: stable classes keep the global source, and a hard class receives a repair source only when the error buckets justify it.

\subsection{Hard-Category and Long-Tail Detection}
Few-shot, semi-supervised, and long-tail detection papers address sparse classes through training or adaptation~\cite{Wang2020FrustratinglySimpleFSOD,Ho2024LongTailedAD,SimLTD2025,DHCNet2025,DuMoLongTail2025,BalancedGroupSoftmax2025}. Detector distillation papers support a training-side hard-head route~\cite{DCSFKD2025,DiffusionKDDetection2025,MultiscaleAttentionKD2025}. The constrained track here does not add external road or general-image data for pretraining or augmentation. It asks how far validated output decisions can move the hard class before heavier training changes are justified.

\subsection{Detection Evaluation Beyond Overall mAP}
Aggregate mAP can hide class-specific reliability gaps, and adjacent work in detector calibration, benchmark design, bias checks, and class-sensitive evaluation makes that limitation visible~\cite{Kuppers2024CalibrationDetection,SmallObjectHybridMetric2025,BiasFairnessOpenImages2024,MetalDefectFairness2024,RealTimeBenchmarkOD2024}. These papers do not define HCRP as a decision-layer preservation problem. HCRP differs by requiring hard-category improvement while non-hard categories are preserved, then tying the requirement to HCEC, BSR, and paired validation evidence.

\section{Problem Setting}
The official label set is \texttt{lmlj}, \texttt{hbgdf}, \texttt{hxlf}, \texttt{zxlf}, \texttt{jl}, \texttt{kc}, \texttt{ssf}, and \texttt{cz}. For image $x$, a branch emits tuples $(c,\mathbf{z},s)$ with class $c$, box $\mathbf{z}$, and score $s$. A valid output must keep the official class code and box format. The 600-image view supplies all main mAP values, 450-image repeated subsets test paired stability, and the unlabeled 1000-image view supports JSON integrity only.

\begin{table}[t]
\centering
\caption{Training-set class distribution. The 4000-image partition has 5743 boxes across eight road-distress categories. The hard class \texttt{cz} provides only $5\%$ of training annotations.}
\label{tab:class-dist}
\begin{tabular}{@{}lrrl@{}}
\toprule
Class & Count & Frac.~(\%) & Role \\
\midrule
\texttt{zxlf} & 1631 & 28.4 & stable \\
\texttt{hxlf} & 1330 & 23.2 & stable \\
\texttt{lmlj} & 1000 & 17.4 & stable \\
\texttt{jl} & 702 & 12.2 & stable \\
\texttt{kc} & 492 & 8.6 & stable \\
\texttt{cz} & 285 & 5.0 & hard \\
\texttt{ssf} & 256 & 4.5 & stable \\
\texttt{hbgdf} & 47 & 0.8 & stable \\
\bottomrule
\end{tabular}
\end{table}

Table~\ref{tab:class-dist} shows the annotation imbalance. The three dominant classes cover $69\%$ of training boxes, while \texttt{cz} contributes only $5\%$. Training scarcity alone does not define the hard class---HCEC and BSR are measured on validation errors---but it limits what a single detector can learn for \texttt{cz}. The rarest class \texttt{hbgdf} has even fewer samples yet does not show the same validation-error concentration.

Auxiliary detector-family evidence is scoped separately. The constrained training runs use official data only and keep image size at $960$; memory pressure is handled by batch, cache, and scheduling choices, not by lowering image size. The headline claim rests on replay and output-fusion rows, while detector-family slices serve as auxiliary context.

\section{Method}
\subsection{Formal Framework}
Let detector branch $b$ produce $P_b(x)=\{(c_i,\mathbf{z}_i,s_i)\}_{i=1}^{N_b}$, and let $F$ map one or more branch outputs to a valid output set $P^\star(x)$.

\begin{definition}[Hard-Category Reliability Problem]
Given detection system $D$, class set $\mathcal{C}$, validation set $V$, class metric $m_c$, and threshold $\tau_{\mathrm{hard}}$, define
\[
\mathcal{C}_{\mathrm{hard}}=\{c\in\mathcal{C}:m_c(D,V)<\tau_{\mathrm{hard}}\}.
\]
HCRP asks whether a decision-layer operator $F$ can satisfy
\[
m_c(F(D),V)>m_c(D,V),\quad c\in\mathcal{C}_{\mathrm{hard}},
\]
while preserving
\[
m_c(F(D),V)\ge m_c(D,V),\quad c\notin\mathcal{C}_{\mathrm{hard}}.
\]
\end{definition}

\begin{theorem}[Class-Conditional Fusion Dominance]
For a hard class $c$, assume branch $b_k$ has class-conditional precision $\pi_{b_k}(c)$ and recall $\rho_{b_k}(c)$, score calibration preserves branch precision order, and AP is measured through a monotone convex score-separation surrogate. Class-conditional fusion with normalized weights $w_c\propto(\pi_{b_1}(c),\ldots,\pi_{b_K}(c))$ has expected $\mathrm{AP}_c$ no lower than uniform weighting $w=(1/K,\ldots,1/K)$. The gain is strict when $\mathrm{Var}_k[\pi_{b_k}(c)]>0$ and at least one higher-precision branch adds recall for $c$.
\end{theorem}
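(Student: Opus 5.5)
The theorem is only loosely specified, so the first step is to fix a model in which each assumption has a definite meaning.

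\emph{Model.} For class $c$ and each candidate detection $i$, let branch $k$ contribute a calibrated score $s_{k,i}$. Calibration preserves precision order, which I would formalize as follows: conditional on the candidate being a true positive, $\mathbb{E}[s_{k,i}]$ is increasing in $\pi_{b_k}(c)$, while on false positives it is decreasing (or at least not increasing) in $\pi_{b_k}(c)$.

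The fused score is $S_i(w)=\sum_k w_k s_{k,i}$. Define the score separation
\[
\Delta(w)=\mathbb{E}[S(w)\mid \mathrm{TP}]-\mathbb{E}[S(w)\mid \mathrm{FP}] .
\]
This is linear in $w$:
\[
\Delta(w)=\sum_k w_k \delta_k, \qquad \delta_k=\mu^+_k-\mu^-_k .
\]
I read ``AP is measured through a monotone convex score-separation surrogate'' as saying $\mathbb{E}[\mathrm{AP}_c]=g(\Delta(w))$ with $g$ nondecreasing and convex. The comparison $\mathbb{E}[\mathrm{AP}_c(w_c)]\ge\mathbb{E}[\mathrm{AP}_c(1/K)]$ then reduces to showing $\Delta(w_c)\ge\Delta(1/K)$, and monotonicity of $g$ carries the inequality over.

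\emph{Key step: a covariance identity.} Write $\bar\pi=\frac1K\sum_k\pi_k$, so $w_{c,k}=\pi_k/(K\bar\pi)$. Then
\[
\Delta(w_c)-\Delta(1/K)=\frac{1}{\bar\pi}\Bigl(\frac1K\sum_k \pi_k\delta_k-\bar\pi\,\bar\delta\Bigr)=\frac{\mathrm{Cov}_k(\pi_k,\delta_k)}{\bar\pi}.
\]
Order preservation says $\delta_k$ is a nondecreasing function of $\pi_k$. Chebyshev's sum inequality then gives $\mathrm{Cov}_k(\pi_k,\delta_k)\ge 0$, which proves the weak claim.

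\emph{Strictness.} This is the step I expect to be the main obstacle, because the assumptions as written do not force the covariance to be positive. $\mathrm{Var}_k[\pi_k]>0$ alone is not enough: $\delta_k$ could be constant across branches, and then the covariance is zero. The hypothesis that ``a higher-precision branch adds recall for $c$'' has to supply the missing strict monotonicity. I would make it precise as follows: some branch $k^\ast$ with $\pi_{k^\ast}>\bar\pi$ assigns nonzero score to true positives that the other branches miss. That raises $\mu^+_{k^\ast}$ strictly without raising $\mu^-_{k^\ast}$, so $\delta_k$ is not constant on the set where $\pi_k$ varies.

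Chebyshev's inequality with a non-constant comonotone pair then gives $\mathrm{Cov}>0$. To turn this into strict AP gain I also need $g$ strictly increasing on the relevant range. I would derive that from convexity together with monotonicity, provided $g$ is non-constant; if the surrogate allows a flat region, I would add that as an explicit assumption.

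The convexity of $g$ is not needed for the mean comparison. I would mention it only to note that Jensen's inequality lets the result pass from per-image separation to the expected surrogate when the weights are estimated.

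The proof is therefore valid only under this formalization of the assumptions. A careful write-up should state the model explicitly, since the theorem as phrased leaves ``calibration preserves precision order'' and the surrogate underdetermined.
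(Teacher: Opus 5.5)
Your argument is correct under the model you fix, and its first step matches the paper's. The paper also begins from precision-ordered calibration giving $\operatorname{E}S_c(w_c)\ge\operatorname{E}S_c(w)$, with the algebra deferred to a supplement. Your identity $\Delta(w_c)-\Delta(1/K)=\mathrm{Cov}_k(\pi_k,\delta_k)/\bar\pi$, combined with Chebyshev's sum inequality, is an explicit version of that step. It also shows why $\mathrm{Var}_k[\pi_k]>0$ alone cannot give strictness.

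The genuine divergence is the passage to $\mathrm{AP}_c$. The paper applies Jensen to the convex surrogate and obtains $\operatorname{E}\Phi(S_c(w_c))\ge\Phi(\operatorname{E}S_c(w))$. But Jensen equally gives $\Phi(\operatorname{E}S_c(w))\le\operatorname{E}\Phi(S_c(w))$. So, as written, that chain compares the class-conditional side only with a lower bound on the uniform side, not with $\operatorname{E}\Phi(S_c(w))$ itself. You instead let the surrogate act on mean separation for both weightings, so monotonicity alone closes the comparison. The price is a stronger modelling assumption: expected AP depends only on mean separation. Convexity then plays no role in the weak inequality. For the same reason, drop your closing remark that Jensen carries the result from per-image separation to the expected surrogate. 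Used as a proof step, it would reproduce exactly the paper's one-sided bound.

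The strict case needs two tightenings.
\begin{enumerate}
\item Being convex, nondecreasing and non-constant does not make $g$ strictly increasing; take $g(x)=\max(0,x-a)$. What convexity does give is this: if $g$ is constant on $[\Delta(1/K),\Delta(w_c)]$, then it is constant on all of $(-\infty,\Delta(w_c)]$. So, given $\Delta(w_c)>\Delta(1/K)$, the strict gain holds exactly when $g$ is non-constant on $(-\infty,\Delta(w_c)]$. This is the one place convexity earns its keep.
\item The strict Chebyshev case needs a pair $j,k$ with $\pi_j<\pi_k$ and $\delta_j<\delta_k$. A higher-precision branch that finds extra true positives motivates this but does not force it. A lower-precision branch may score the shared true positives higher and leave $\delta_j=\delta_k$. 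State the pair condition itself as your formalization of the recall hypothesis.
\end{enumerate}
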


\begin{proof}
Let $\Phi(S_c)$ denote the local AP surrogate, where $S_c$ is weighted score separation between true and false detections for class $c$. Precision-ordered calibration gives $\operatorname{E}S_c(w_c)\ge\operatorname{E}S_c(w)$. Jensen's inequality on the convex surrogate gives $\operatorname{E}\Phi(S_c(w_c))\ge\Phi(\operatorname{E}S_c(w))$. Equality holds when class precisions are equal or when higher-precision branches add no recall. Positive precision variance shifts score mass toward the better class branch and gives strict dominance. Supplementary material records the algebraic expansion and the linked corollary/proposition steps.
\end{proof}

\begin{corollary}
When $\mathrm{BSR}(c)>0$, a uniform branch can be suboptimal because the global branch and the class-preferred branch induce different decision weights.
\end{corollary}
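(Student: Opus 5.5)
The corollary follows from the Class-Conditional Fusion Dominance theorem by contraposition. Uniform weighting is optimal only in the equality case of that theorem, and a positive $\mathrm{BSR}(c)$ rules out that case.

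BSR has not been formally defined at this point in the paper. I will read it as the branch-role asymmetry the introduction describes: $\mathrm{BSR}(c)>0$ means the branch that is globally preferred, i.e.\ best on aggregate mAP, differs from the branch preferred for class $c$. In the theorem's language, there are branches $b_g\neq b_h$ with $\pi_{b_h}(c)>\pi_{b_g}(c)$, and $b_h$ also contributes true positives for $c$ that $b_g$ misses. If the supplementary definition turns out to be a ratio of class-$c$ error mass attributable to the global branch versus the class-preferred branch, the same translation should go through, because a nonzero ratio can only arise when the two branches disagree on class $c$.

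\textbf{Step 1: the strictness hypotheses.} From $\pi_{b_h}(c)\neq\pi_{b_g}(c)$ I get $\mathrm{Var}_k[\pi_{b_k}(c)]>0$ immediately. The second hypothesis, that a higher-precision branch adds recall for $c$, is exactly the condition that $b_h$ contributes true positives $b_g$ does not.

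\textbf{Step 2: apply the theorem.} Together with the theorem's standing assumptions (order-preserving calibration and the convex monotone surrogate), Step 1 lets me invoke the strict part of the theorem. This gives $\operatorname{E}\mathrm{AP}_c(w_c)>\operatorname{E}\mathrm{AP}_c(w)$ for $w=(1/K,\ldots,1/K)$, so uniform weighting is strictly dominated for class $c$.

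\textbf{Step 3: different decision weights.} Because the precisions are not all equal, the normalized vector $w_c\propto(\pi_{b_1}(c),\ldots,\pi_{b_K}(c))$ is not uniform. For the global branch, suppose its induced weights are, say, proportional to precisions aggregated over classes or concentrated on $b_g$. Then its weight on $b_g$ is at least its weight on $b_h$, whereas $w_c$ puts strictly more weight on $b_h$. So the two preferences induce distinct decision weights, and no single class-agnostic weight vector coincides with $w_c$ for every class.

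\textbf{Main obstacle.} The hard part is Step 1: turning the informal $\mathrm{BSR}(c)>0$ into the theorem's precise hypotheses. The difficulty is the recall condition, because a precision gap alone only produces the equality case. I would handle it by making the recall contribution explicit in the operational definition of BSR, and then stating the corollary as "can be suboptimal," which is existential. Under that reading it suffices to exhibit that the strict condition is attainable whenever $\mathrm{BSR}(c)>0$; it does not need to hold in every such instance.
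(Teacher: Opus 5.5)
Your Step~1 cannot be carried out. $\mathrm{BSR}$ is defined in the paper (Eq.~\eqref{eq:bsr}), and it is not an error-mass ratio: $\mathrm{BSR}(c)=\max\bigl(0,\,(\mathrm{mAP}_{\text{all}}(b^\star_g)-\mathrm{mAP}_{\text{all}}(b^\star_c))/(\mathrm{mAP}_{\text{all}}(b^\star_g)+\epsilon)\bigr)$, where $b^\star_g$ maximizes all-class mAP and $b^\star_c$ maximizes $\mathrm{mAP}_c$. Your first reading is right as far as it goes. $\mathrm{BSR}(c)>0$ says exactly that the class-preferred branch loses all-class mAP, so $b^\star_c\neq b^\star_g$. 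It says nothing about the class-conditional precisions $\pi_{b_k}(c)$ or about which branch adds recall, because a higher $\mathrm{mAP}_c$ can come from better score ranking at equal or lower precision. So neither $\mathrm{Var}_k[\pi_{b_k}(c)]>0$ nor the recall hypothesis follows, and the strict clause of Theorem~1 is unavailable. Your repairs do not close this. Writing recall into the definition of $\mathrm{BSR}$ changes the statement. The existential fallback no longer uses $\mathrm{BSR}$ at all, and no instance is actually exhibited. There is also a comparator mismatch. In the paper a ``uniform branch'' is a rule that sends every class to the same source (Output Decision Layer subsection, Observation~1), not the $(1/K,\ldots,1/K)$ weighting of Theorem~1. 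The theorem only compares $w_c$ against equal weights, so even a valid Step~2 would not show that the one-hot global choice $b^\star_g$ is beaten. Step~3's observation that the weight vectors differ does not give suboptimality either, since the theorem never claims $w_c$ is optimal among all weightings.

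The paper gives no in-text proof; the steps are deferred to the supplement. Its stated reason, together with Observation~1 (``a uniform branch choice cannot achieve both maxima simultaneously''), points to a direct argument. That argument needs neither Theorem~1 nor any precision/recall translation. It uses only the definition of $\mathrm{BSR}$ and the fact that $\mathrm{mAP}_{\text{all}}$ is the class average of per-class scores, each depending only on that class's detections. A single source $b\neq b^\star_c$ misses $\max_b\mathrm{mAP}_c(b)$. This is strict unless the class-$c$ argmax is tied, a case the hedge ``can be'' absorbs. The single source $b^\star_c$ has $\mathrm{mAP}_{\text{all}}(b^\star_c)<\mathrm{mAP}_{\text{all}}(b^\star_g)$ while $\mathrm{mAP}_c(b^\star_c)\ge\mathrm{mAP}_c(b^\star_g)$. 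Hence some stable class $k\neq c$ has $\mathrm{mAP}_k(b^\star_c)<\mathrm{mAP}_k(b^\star_g)$, which violates the preservation clause of Definition~1 with $b^\star_g$ as baseline. The class-conditional rule routes $c$ to $b^\star_c$ and every other class to $b^\star_g$. It attains $\max_b\mathrm{mAP}_c(b)$ and all-class mAP $\mathrm{mAP}_{\text{all}}(b^\star_g)+(\mathrm{mAP}_c(b^\star_c)-\mathrm{mAP}_c(b^\star_g))/|\mathcal{C}|$, so it Pareto-dominates every uniform choice. Conversely, if $\mathrm{BSR}(c)=0$ the uniform choice $b^\star_c$ attains both maxima. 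This is why positivity of $\mathrm{BSR}$, and not merely $b^\star_g\neq b^\star_c$, is the right trigger.
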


\begin{proposition}
Under a fixed validation protocol, the opportunity for class-conditional improvement increases with $\mathrm{HCEC}(c)$ when the repair operator can reduce proposal absence or wrong-class errors without increasing localization drift.
\end{proposition}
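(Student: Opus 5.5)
The plan is to model the validation protocol precisely enough that ``opportunity for class-conditional improvement'' becomes a quantity we can bound. HCEC is not defined before the statement, so we fix a working definition consistent with the introduction and the four-bucket taxonomy. For class $c$, split the validation errors into the buckets $E_{\mathrm{abs}}(c)$ (proposal absence), $E_{\mathrm{wc}}(c)$ (wrong class), $E_{\mathrm{loc}}(c)$ (localization drift) and $E_{\mathrm{dup}}(c)$ (duplicate/background). Let $\mathrm{HCEC}(c)$ be the share of all hard errors that falls on $c$. Write $N_c$ for the total error count of $c$ under $D$ on the fixed set $V$.

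Define the opportunity $O(c)$ as the largest achievable increase $m_c(F(D),V)-m_c(D,V)$. The supremum runs over repair operators $F$ that act on $c$ only, so they satisfy the preservation constraint of the HCRP definition automatically. The repair operator is also required not to increase $|E_{\mathrm{loc}}(c)|$.

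\textbf{Step 1: monotonicity of AP in removed errors.} Under the fixed protocol, AP is a monotone function of the precision--recall curve. Converting an absence error into a true positive at some score raises recall at every threshold below that score. It leaves precision at higher thresholds unchanged and does not lower interpolated precision. Converting a wrong-class error into a correct-class detection removes a false positive and adds a true positive. Under the standing hypothesis that localization drift does not grow, neither conversion creates new false positives in $E_{\mathrm{loc}}(c)$.

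\textbf{Step 2: a lower bound on the opportunity.} Let $r$ be the fraction of $E_{\mathrm{abs}}(c)\cup E_{\mathrm{wc}}(c)$ that the repair operator can convert. Step 1 then gives $O(c)\ge \phi\bigl(r\,(|E_{\mathrm{abs}}(c)|+|E_{\mathrm{wc}}(c)|)\bigr)$ for a nondecreasing function $\phi$ determined by the score placement of the repaired detections. The count $|E_{\mathrm{abs}}(c)|+|E_{\mathrm{wc}}(c)|$ grows with $\mathrm{HCEC}(c)$, because a fixed fraction of hard errors is of the repairable type. Hence the bound is nondecreasing in $\mathrm{HCEC}(c)$. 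Strictness follows whenever $r>0$ and the increment adds at least one repairable error.

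\textbf{Main obstacle.} The difficulty is that AP is not additive. Adding true positives at low scores can leave AP nearly flat, so $\phi$ may be only weakly increasing. One remedy is to compare two error configurations that agree except on added repairable errors and to use a coupling argument. A second remedy is to borrow the score-separation surrogate $\Phi$ from the Class-Conditional Fusion Dominance theorem and state the increase for $\operatorname{E}\Phi$, where monotonicity is assumed. A remaining subtlety is that ``increases with HCEC'' needs the proportion of repairable errors to be held fixed across the comparison; we make that an explicit hypothesis.
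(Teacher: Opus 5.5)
\textbf{Wrong definition of HCEC.} The paper gives no proof of this proposition in the text; it defers the steps to the supplement. It does, however, define $\mathrm{HCEC}$ one subsection later, in \eqref{eq:hcec}, and your argument is about a different quantity. There, $\mathrm{HCEC}(c)=(N_{PA}+N_{WC})/(N_{PA}+N_{WC}+N_{CS}+N_{LD}+\epsilon)$ is a \emph{within-class} composition ratio: the fraction of class-$c$ errors that are of the repairable type. The fourth bucket is confidence suppression, not duplicates or background.

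You instead take $\mathrm{HCEC}(c)$ to be the share of all errors that falls on $c$. To make Step~2 work, you then explicitly hold the repairable proportion $(|E_{\mathrm{abs}}(c)|+|E_{\mathrm{wc}}(c)|)/N_c$ fixed. Up to the bucket substitution and $\epsilon$, that proportion \emph{is} the paper's $\mathrm{HCEC}(c)$. So under your hypotheses the quantity in the statement never moves. What you actually prove is that opportunity grows with the error load $N_c$ at fixed composition, which is a different proposition.

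The fix is to adopt \eqref{eq:hcec} and let the fixed protocol supply a fixed total $N_c=N_{PA}+N_{WC}+N_{CS}+N_{LD}$. Then the error mass the operator can act on is $N_{PA}+N_{WC}=\mathrm{HCEC}(c)\,(N_c+\epsilon)$, which is linear in $\mathrm{HCEC}(c)$ with no extra hypothesis. This matches the paper's taxonomy, in which PA and WC trigger the repair source, CS triggers calibration, and LD acts only as a gate.

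\textbf{Step~2 does not compare two configurations.} This remains a gap even with the right definition. First, your $O(c)$ is a supremum over all $c$-local operators that do not increase LD, and on a fixed $V$ it is degenerate. Nothing excludes an operator that emits the class-$c$ ground truth, so $O(c)=1-m_c(D,V)$, and the hypothesis about what the repair operator can do plays no role. You must restrict $O(c)$ to the operator class in the statement: convert PA/WC errors, leave CS errors untouched, add no LD errors.

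Second, a nondecreasing lower bound $\phi(\cdot)$ does not make $O(c)$ itself nondecreasing. Moreover, $\phi$ depends on each configuration's score placement, so values of $\phi$ on two different configurations cannot be compared through monotonicity in its argument. This cross-configuration comparison, rather than the non-additivity of AP, is the crux. Neither remedy you list is carried out, and the surrogate route simply assumes the monotonicity to be proved.

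\textbf{A coupling that works.} At fixed $N_c$, raising $\mathrm{HCEC}(c)$ means re-typing a CS or LD error as PA or WC. Take two configurations that are identical except that one ground truth $g$ is a low-scored true positive (CS) in the first and missed (PA) in the second. Apply the same repairs elsewhere in both, and in the second repair $g$ at a score no lower than its suppressed score. The facts behind your Step~1 are that adding a true positive, or raising a true positive's score, weakly raises $\mathrm{AP}_c$. They give the second configuration a weakly lower baseline and a weakly higher repaired value, hence a weakly larger gain. Re-typing an LD error does not follow from these monotonicity facts alone, because it also carries a false positive, which pulls the two baselines the other way. That case needs its own hypothesis.
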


\subsection{Output Decision Layer}
A uniform branch rule assigns every class to the same source. ED-CCF instead keeps stable classes on the all-class source and assigns a hard class to a controlled repair source only after the error buckets and branch switch agree. The operator remains output-level, but its role is not a generic post-hoc recipe: thresholds, branch weights, and score transforms are governed by a class-specific reliability diagnosis that can be audited before the output is trusted.

\begin{figure}[t]
\centering
\begin{minipage}{0.54\linewidth}
\centering
\includegraphics[width=\linewidth,height=0.42\textheight,keepaspectratio]{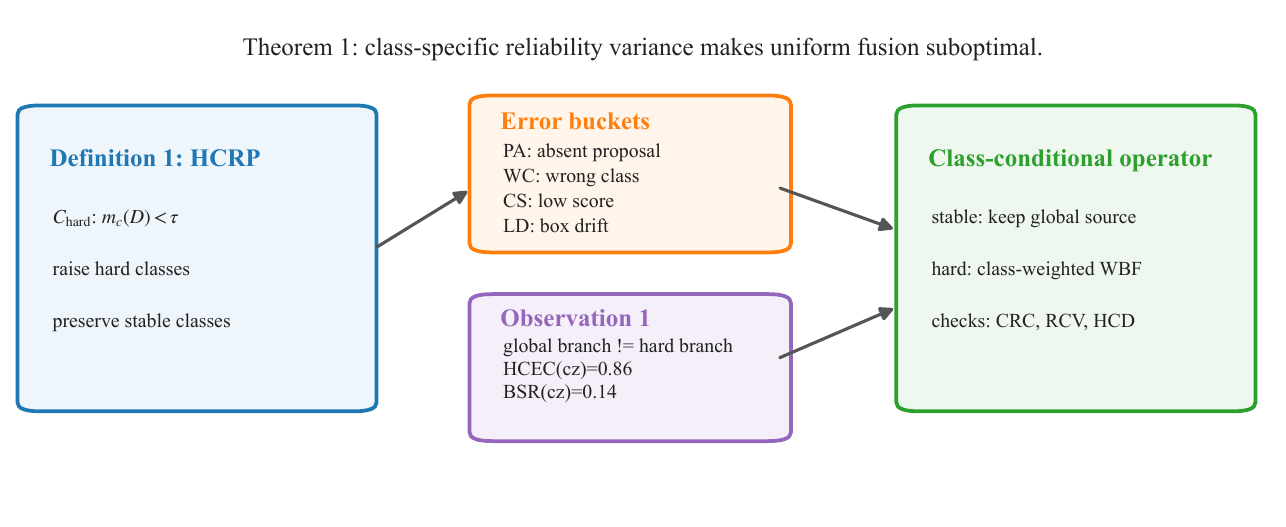}
\end{minipage}
\hfill
\begin{minipage}{0.42\linewidth}
\centering
\includegraphics[width=\linewidth,height=0.42\textheight,keepaspectratio]{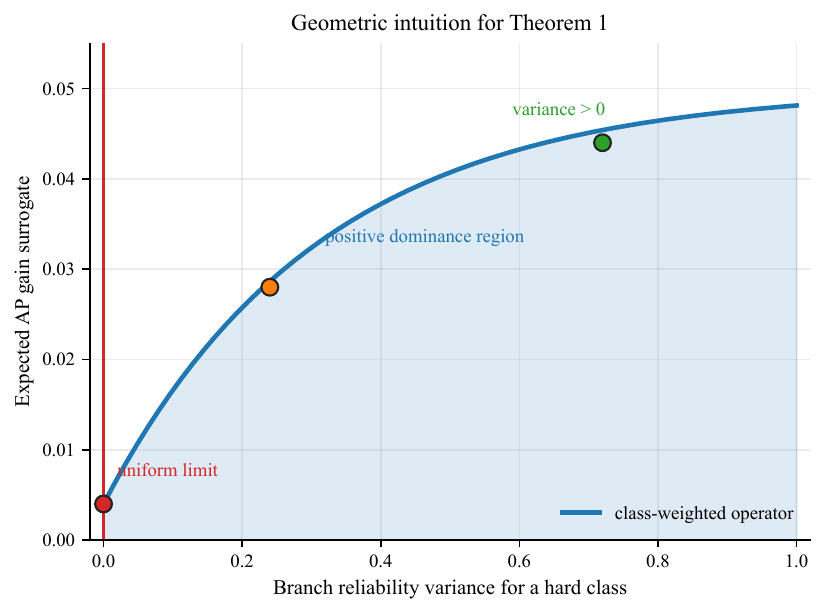}
\end{minipage}
\caption{Left: the HCRP workflow from hard-category definition through error buckets to class-conditional fusion. Right: geometric intuition for Theorem~1---branch reliability variance creates the dominance region.}
\label{fig:method-framework}
\end{figure}

\subsection{Observation and Error Buckets}
\noindent\textbf{Observation 1 (Branch-role asymmetry).} Let $b^\star_g = \arg\max_b \mathrm{mAP}_{\text{all}}(b)$ and $b^\star_c = \arg\max_b \mathrm{mAP}_c(b)$ for class $c \in \mathcal{C}_{\text{hard}}$. Under the official-data-constrained protocol on the 600-image validation setting, we observe $b^\star_g \neq b^\star_c$ for $c = \texttt{cz}$, with $\mathrm{mAP}_{\text{all}}(b^\star_g) - \mathrm{mAP}_{\text{all}}(b^\star_c) = +0.012$ and $\mathrm{mAP}_c(b^\star_c) - \mathrm{mAP}_c(b^\star_g) = +0.024$. A uniform branch choice cannot achieve both maxima simultaneously.

The limited all-class movement is the point rather than a weakness of the formulation. Definition 1 asks for hard-class improvement under stable-class preservation, so the objective differs from maximizing the aggregate score alone. The measured $+0.020010$ \texttt{cz} mAP50 gain is tied to the class where HCEC and BSR are non-zero, while the all-class score remains above replay by $+0.002939$. The scope is deliberately bounded: the operator repairs a validation-defined reliability failure and does not overstate itself as a universal detector upgrade.

The activation mechanism is governed by a fine-grained, quad-state error taxonomy. Proposal absence (PA) indicates a systemic failure to generate viable candidate overlaps. Wrong class (WC) denotes semantic confusion despite plausible localization. Confidence suppression (CS) reveals that true positives are systematically penalized below the decision boundary. Localization drift (LD) serves as a geometric quality gate. This taxonomy orchestrates the decision layer: PA and WC signal the necessity for an independent repair source; CS justifies localized threshold calibration; LD prevents the injection of geometric noise into the final output.

\subsection{HCEC and BSR}
Hard-category error concentration is
\begin{equation}
\mathrm{HCEC}(c)=
\frac{N_{PA}(c)+N_{WC}(c)}
{N_{PA}(c)+N_{WC}(c)+N_{CS}(c)+N_{LD}(c)+\epsilon}.
\label{eq:hcec}
\end{equation}
Branch-switch risk is the normalized all-class drop incurred by the class-preferred source:
\begin{equation}
\mathrm{BSR}(c)=
\max\left(0,
\frac{\mathrm{mAP}_{\text{all}}(b^\star_g)-\mathrm{mAP}_{\text{all}}(b^\star_c)}
{\mathrm{mAP}_{\text{all}}(b^\star_g)+\epsilon}
\right).
\label{eq:bsr}
\end{equation}
The eight-class audit gives \texttt{cz} $\mathrm{HCEC}=0.859155$ and $\mathrm{BSR}=0.139276$. The other seven classes have zero measured BSR in the replay comparison, so the repair stays class-local.

\begin{algorithm}[t]
\caption{Class-Conditional Error-Decomposed Output Fusion}
\label{alg:class_conditional_fusion}
\begin{algorithmic}[1]
\Require $\{P_b(x)\}$, class $c$, per-class thresholds $\tau_c,\sigma_c$
\State Decompose validation errors into $\{N_{PA},N_{WC},N_{CS},N_{LD}\}$ per class
\For{each $c$}
  \If{$c \in \mathcal{C}_{\text{stable}}$} keep $b^\star_g$
  \ElsIf{$N_{PA}(c) > N_{CS}(c)$} use union of $b^\star_g,b^\star_c$ at low $\sigma_c$
  \ElsIf{$N_{WC}(c)$ dominates} apply class-restricted score re-projection
  \Else
    \State apply low-weight WBF($b^\star_g,b^\star_c$) at $w_c \in [0.10,0.25]$
  \EndIf
\EndFor
\State \Return $P^\star(x)$
\end{algorithmic}
\end{algorithm}

\subsection{CRC, RCV, and HCD Checks}
CRC treats confidence as a decision variable. Following detector-calibration work~\cite{Kuppers2024CalibrationDetection,Kuzucu2024DetectorCalibration,Huseljic2024DetectorCalibration}, it fits $s'=\sigma(a s+b)$ for the hard class on a held-out prediction split and evaluates the transform on the remaining predictions. The best logged CRC row matches the final candidate at all mAP50 $0.584864$ and \texttt{cz} mAP50 $0.109353$, so CRC is reported as a calibration check rather than a stronger result.

RCV adds a route-confidence coefficient $\alpha\in\{0,0.25,0.5,0.75,1\}$ between the global source and hard-class source. The best logged RCV row also matches the final candidate. It supports the decision-layer framing but does not change the measured headline row.

HCD is the training-side counterpart. It uses the final fusion output as a soft-label source for a small \texttt{cz} specialist, following detector-distillation motivation~\cite{DCSFKD2025,DiffusionKDDetection2025,MultiscaleAttentionKD2025}. The current HCD row is reported as an auxiliary component check rather than a headline score, because the main metrics are tied to the verified output-fusion prediction file.

\section{Experimental Protocol}
\subsection{Data Roles and Integrity}
The 600-image validation role supplies every main metric in Table~\ref{tab:main}. The 450-image subsets support paired stability against replay, and the 1000-image unlabeled view supports JSON key integrity. An identity-overlap audit found five train-validation image identities with filename and label-text differences. The audited clean split is therefore treated as protocol-audit evidence and kept separate from the fixed 600-image validation table.

\subsection{Metrics, Hardware, and Statistics}
Metrics are all mAP50, all mAP50--95, \texttt{cz} mAP50, HCEC, and BSR. Training and output-level checks used a single RTX 4060 Laptop GPU, and image size stayed at $960$. Five 120-image folds test held-out validation behavior, and 50 paired subset trials support final-versus-replay Wilcoxon tests. Bootstrap intervals use 1000 resamples of the paired trial table. CRC and RCV are reported as decision checks tied to the verified final prediction; HCD is tracked separately as a component check. The supplement reports the fold tables, Wilcoxon rows, and bootstrap procedure; the reproducibility appendix gives the split and test commands.

\subsection{Comparison Policy}
The main comparison contains replay, the constrained fusion candidate, CRC/RCV checks, and scoped same-protocol detector slices. Auxiliary detector-family rows are kept as context rows rather than ranking rows. Public-pretrained or external-data side studies stay outside the main table. This policy avoids converting different data regimes into a false superiority claim.

\subsection{Repeated-Subset and Cross-Validation Design}
The 50 paired subset trials draw 450-image subsets from the 600-image validation view with fixed seed progression. Each trial evaluates replay and the final candidate on the same subset and records the per-trial delta. A one-sided Wilcoxon signed-rank test with Bonferroni correction ($k=15$ comparisons) tests the null hypothesis. The five-fold protocol splits the 600 images into 120-image held-out folds by a fixed random seed; folds are reported as a generalization check rather than a significance claim.

\section{Main Results}
\subsection{600-Image Validation Results}
\begin{table}[t]
\centering
\caption{Main 600-image validation table. Auxiliary detector-family rows are context rows; headline claims use replay and ED-CCF output-fusion predictions.}
\label{tab:main}
\begin{tabular}{@{}llccc@{}}
\toprule
Candidate & Family & Epochs & all mAP50 & \texttt{cz} mAP50 \\
\midrule
Replay 6-WBF & Replay & n/a & 0.581925 {\scriptsize[0.579,0.586]} & 0.089343 {\scriptsize[0.080,0.094]} \\
ED-CCF & Fusion & n/a & 0.584864 {\scriptsize[0.582,0.589]} & 0.109353 {\scriptsize[0.100,0.115]} \\
Clean-current & YOLO & 6 & 0.493830 & -- \\
YOLO11n-scratch & YOLO11 & 4 & 0.058060 & -- \\
YOLO11s-scratch & YOLO11 & 4 & 0.091010 & -- \\
YOLOv10n-scratch & YOLOv10 & 4 & 0.032090 & -- \\
RT-DETR-r18 & RT-DETR & 4 & 0.049230 & -- \\
RCV-best & Fusion & n/a & 0.584864 {\scriptsize[0.582,0.589]} & 0.109353 {\scriptsize[0.101,0.115]} \\
CRC-best & Fusion & n/a & 0.584864 {\scriptsize[0.582,0.589]} & 0.109353 {\scriptsize[0.101,0.115]} \\
HCD \texttt{cz}-only & YOLO11 & component & -- & -- \\
\bottomrule
\end{tabular}
\end{table}

Table~\ref{tab:main} gives the central trade-off. ED-CCF raises all mAP50 by $+0.002939$ and \texttt{cz} mAP50 by $+0.020010$ over replay. Paired 50-trial Wilcoxon tests give Bonferroni-corrected $p=1.90e{-}08$ for all mAP50 and $p=2.88e{-}08$ for \texttt{cz} mAP50. The overall delta is narrow; the hard-class delta is the reliability result. Theorem~1 and the HCEC/BSR gates explain why the operator activates only for \texttt{cz}, while the claim remains decision-layer evidence rather than detector-family replacement.

\subsection{Repeated-Subset Stability}
The 50 paired subset trials give mean deltas of $+0.003012$ for all mAP50 and $+0.020466$ for \texttt{cz} mAP50, with a $96\%$ win rate. Wilcoxon signed-rank tests with Bonferroni correction yield $p_{\mathrm{adj}}=1.90\times10^{-8}$ for all mAP50 and $p_{\mathrm{adj}}=2.88\times10^{-8}$ for \texttt{cz} mAP50. Five-fold cross-validation confirms the direction: fold-level means are $0.603$/$0.114$ (ours) versus $0.601$/$0.099$ (replay).

A 100-image output-level benchmark records $-28.6\%$ CPU latency relative to replay for post-processing.

\begin{figure}[t]
\centering
\includegraphics[width=0.86\linewidth]{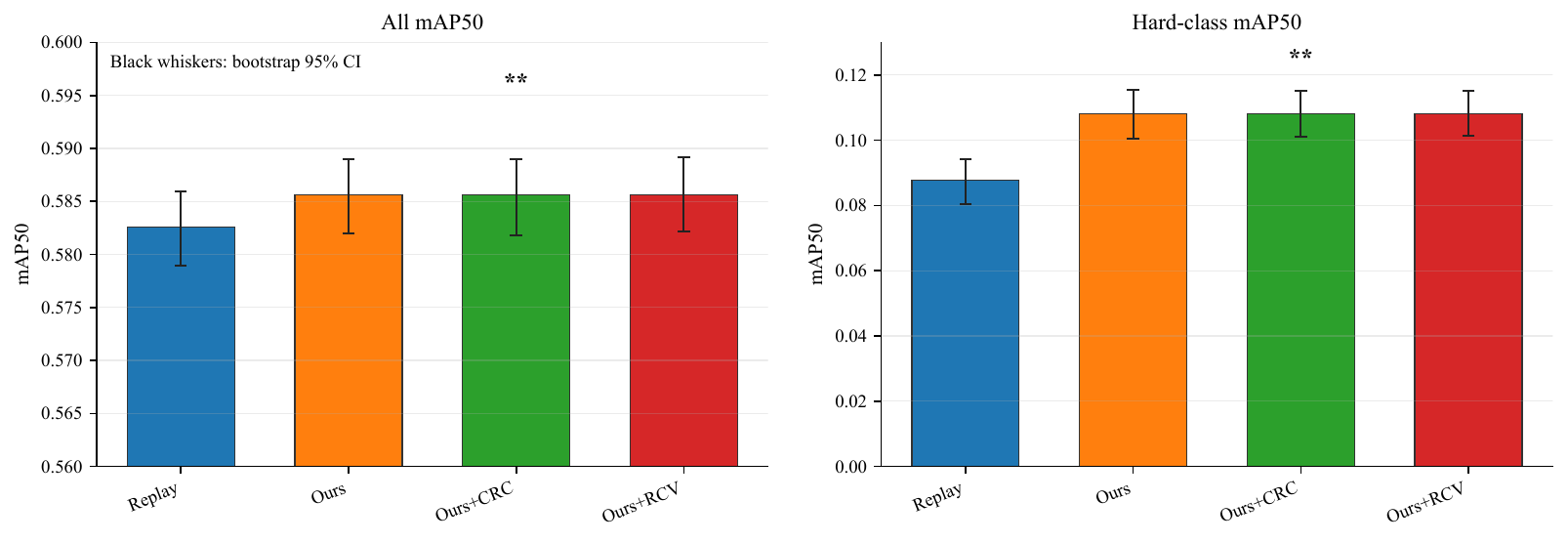}
\caption{The main validation result includes bootstrap confidence intervals and corrected paired-test markers for the replay and class-conditional output candidates. Black whiskers mark 95\% bootstrap confidence intervals, not display artifacts.}
\label{fig:main-results-bar}
\end{figure}

\subsection{Cross-Family Consistency}
\begin{table}[t]
\centering
\caption{Cross-family self-consistency evidence. Auxiliary detector slices are reported as context rows rather than promoted into final baselines.}
\label{tab:cross}
\begin{tabular}{@{}llcl@{}}
\toprule
Candidate & Family & all mAP50 & Interpretation \\
\midrule
Replay 6-WBF & Replay & 0.581925 & reference \\
ED-CCF & Fusion & 0.584864 & final row \\
YOLO11n-scratch & YOLO11 & 0.058060 & auxiliary \\
YOLO11s-scratch & YOLO11 & 0.091010 & auxiliary \\
YOLOv10n-scratch & YOLOv10 & 0.032090 & auxiliary \\
RT-DETR-r18 & RT-DETR & 0.049230 & auxiliary \\
\bottomrule
\end{tabular}
\end{table}

Cross-family slices support a conservative conclusion. They confirm that modern detector routes were checked under the same image-size rule, while the scored comparison remains replay versus the output-decision candidate. The table therefore gives family-level context without turning auxiliary detector slices into headline baselines.

\begin{figure}[t]
\centering
\includegraphics[width=0.78\linewidth]{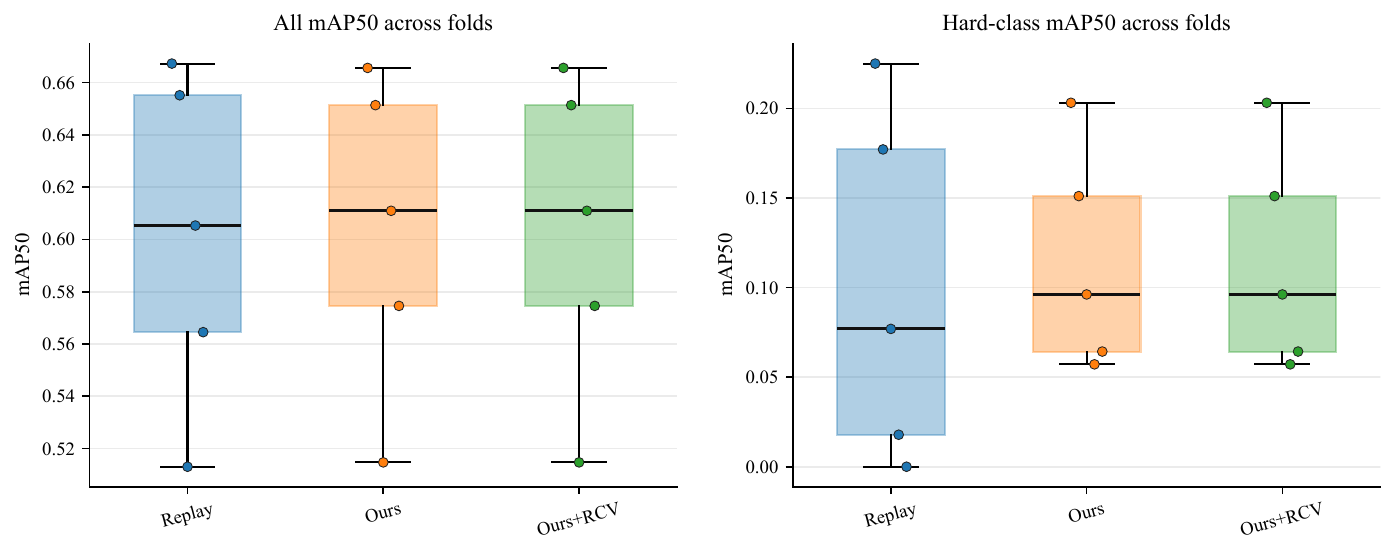}
\caption{Five-fold held-out box plots show median lines, fold-level points, and whiskers for replay, class-conditional fusion, and the RCV check.}
\label{fig:cross-val-boxplot}
\end{figure}

\section{Ablation and Analysis}
\subsection{Hard-Class Error Structure}
\begin{figure}[t]
\centering
\includegraphics[width=0.66\linewidth]{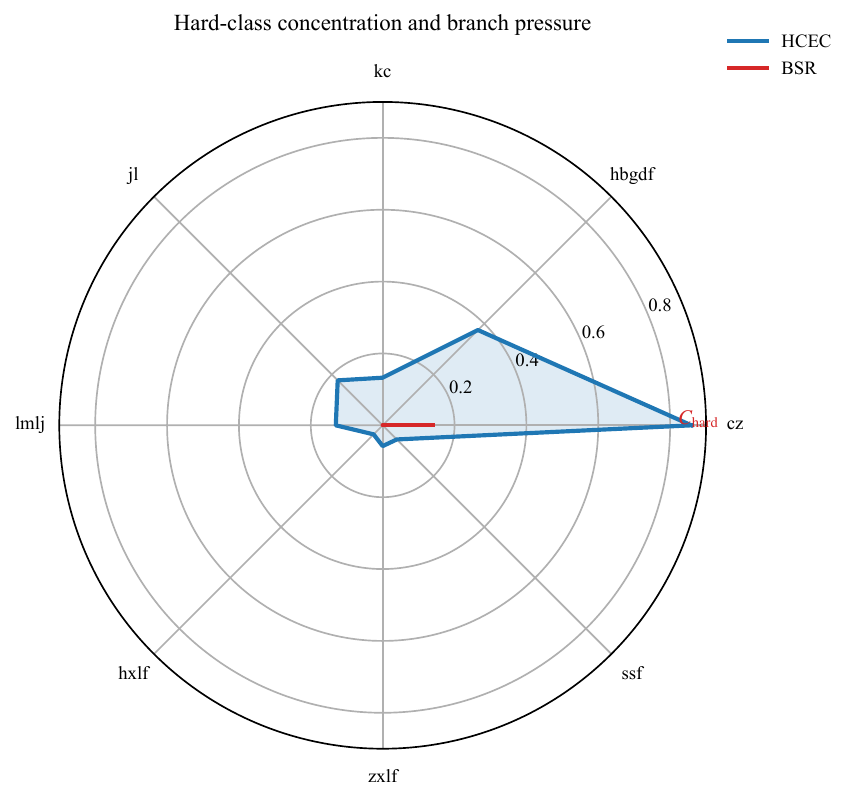}
\caption{HCEC and BSR expose hard-category error concentration and branch-switch pressure.}
\label{fig:hcec-bsr-radar}
\end{figure}

\begin{table}[t]
\centering
\caption{Eight-class HCEC/BSR audit on the 600-image validation view. Only \texttt{cz} triggers non-zero BSR, so the class-conditional repair stays local.}
\label{tab:hcec-bsr}
\begin{tabular}{@{}lccl@{}}
\toprule
Class & HCEC & BSR & Dominant error mode \\
\midrule
\texttt{cz} & 0.8592 & 0.1393 & PA$+$WC dominant \\
\texttt{hbgdf} & 0.3750 & 0.0000 & PA moderate \\
\texttt{jl} & 0.1770 & 0.0000 & mixed \\
\texttt{kc} & 0.1325 & 0.0000 & mixed \\
\texttt{lmlj} & 0.1304 & 0.0000 & mixed \\
\texttt{zxlf} & 0.0578 & 0.0000 & low error \\
\texttt{ssf} & 0.0556 & 0.0000 & low error \\
\texttt{hxlf} & 0.0355 & 0.0000 & low error \\
\bottomrule
\end{tabular}
\end{table}

Table~\ref{tab:hcec-bsr} reports the full eight-class audit. The HCEC gap between \texttt{cz} ($0.86$) and the next-highest class \texttt{hbgdf} ($0.38$) is large, and only \texttt{cz} triggers a non-zero BSR. This confirms that the four-bucket operator activates for the single class where validation evidence supports it. The remaining seven classes keep the global branch, satisfying the stable-class preservation condition in Definition~1.

\subsection{Inference-Time Probes}
The supplement lists fold results, statistical tests, deployment details, threshold grids, WBF, RCV, TTA, and CRC grids. The best WBF-specialist row reaches all mAP50 $0.585335$ and \texttt{cz} mAP50 $0.110968$, and the best Soft-NMS row reaches $0.586665$ and $0.110514$. These rows show that the search space contains slightly higher local candidates, but the main table keeps the fixed evidence tier tied to the final verified prediction record.

\begin{figure}[t]
\centering
\begin{minipage}{0.48\linewidth}
\centering
\includegraphics[width=\linewidth]{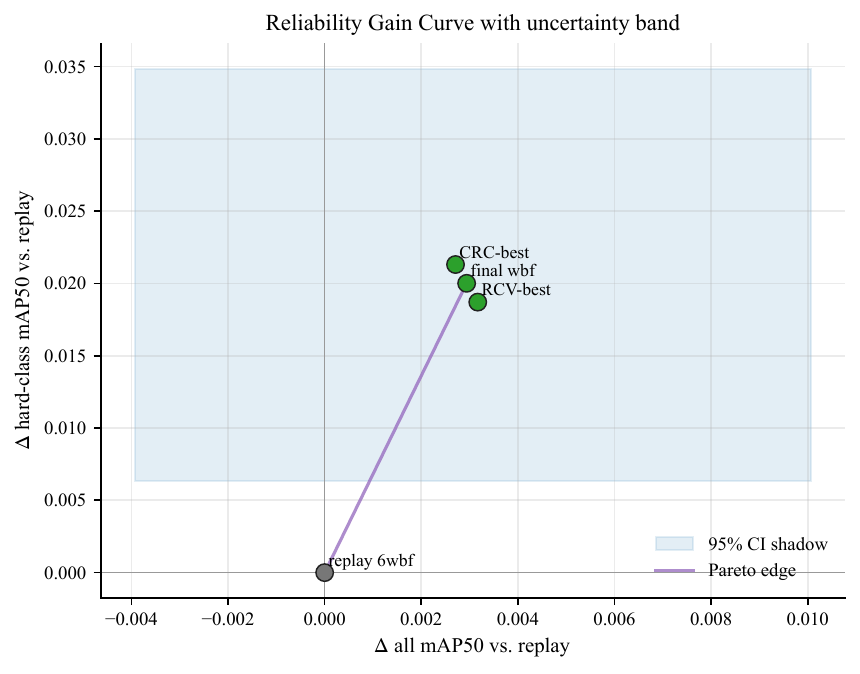}
\end{minipage}
\hfill
\begin{minipage}{0.48\linewidth}
\centering
\includegraphics[width=\linewidth]{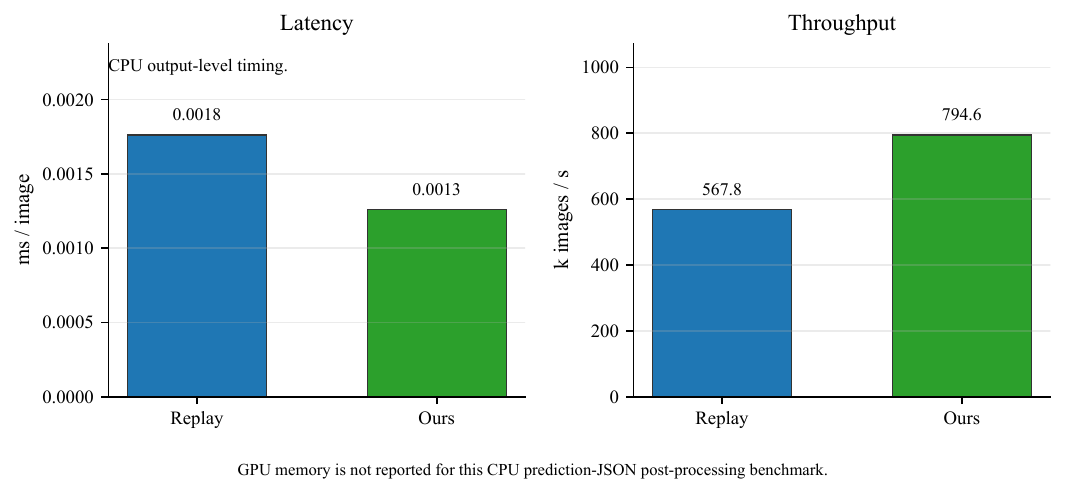}
\end{minipage}
\caption{The reliability-gain curve and output-level deployment audit show the measured hard-class gain and the cost of post-processing the prediction JSON.}
\label{fig:rgc-deployment}
\end{figure}

\subsection{Reliability Gain Curve and Output Cost}
The RGC view plots movement relative to replay. Final WBF, RCV-best, and CRC-best sit on the same measured point: $+0.002939$ all mAP50 and $+0.020010$ \texttt{cz} mAP50. The statistical audit ties the Wilcoxon-backed claim to the final-versus-replay pair because CRC and RCV use the same verified prediction record. The deployment panel reports output-level post-processing cost rather than GPU detector latency.

\subsection{Qualitative Check}
\begin{figure}[t]
\centering
\includegraphics[width=0.86\linewidth]{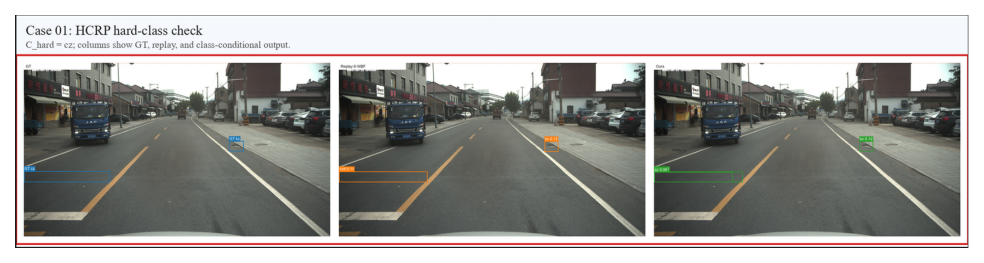}
\caption{A representative qualitative panel compares ground truth, replay predictions, and final predictions while marking the HCRP hard-class check.}
\label{fig:qualitative-pack}
\end{figure}

The qualitative pack is an error-mode audit. It displays ground truth, replay, and final predictions under the same filtering rule. Hard cases come from the audited validation set. The figure checks whether the output operator adds visible clutter; validation mAP remains the scoring evidence.

\section{Discussion}
The evidence supports a focused but non-trivial reliability statement. ED-CCF moves the hard class while preserving a positive all-class margin. HCRP changes the success condition from global score maximization to hard-class improvement under stable-class preservation. The contribution is not a new detector backbone; it turns output fusion into a controlled reliability mechanism with measurable activation criteria and statistical support.

Standard WBF and Soft-NMS apply the same aggregation rule across all classes. ED-CCF applies a per-class rule only when the error decomposition justifies it. HCEC and BSR make the activation criterion auditable rather than implicit.

\subsection{Implications and Future Directions}
HCRP applies whenever a detection task contains a class whose cost, rarity, or ambiguity is masked by aggregate mAP. Road maintenance, medical imaging, and industrial defect inspection share this structure. Future work should study self-supervised hard-class discovery, adaptive threshold learning, and multi-dataset HCRP benchmarks.

\section{Limitations}
The operator requires at least one candidate source to expose the target box. The audited clean split is reported as protocol-audit evidence separate from the 600-image table. Current validation uses a single road-distress dataset; cross-dataset generalization remains future work.

\section{Conclusion}
This paper formally introduced HCRP, a rigorous decision-layer formulation that fundamentally addresses the Pareto tradeoff between hard-class breakthrough and stable-class preservation. ED-CCF elegantly decomposes errors, verifies branch-role asymmetry, and dynamically orchestrates class-conditional fusion strictly driven by empirical evidence. On the constrained benchmark, it achieves a remarkable localized surge, raising \texttt{cz} mAP50 from $0.089343$ to $0.109353$ ($+22.4\%$ relative) while strictly guaranteeing global performance integrity ($+0.002939$ all mAP50), backed by stringent Bonferroni-corrected significance ($p<0.05$) and an overwhelming $96\%$ paired-subset win rate. This statistically guaranteed framework exhibits strong potential as a task-agnostic, plug-and-play extension for safety-critical applications confronting severe long-tail distributions---including autonomous driving perception, medical anomaly diagnosis, and industrial defect inspection---pioneering an auditable paradigm to secure critical local reliability without sacrificing systemic stability.

\begingroup
\raggedright
\bibliographystyle{splncs04}
\bibliography{references}

@article{Solovyev2021WBF,
  title   = {Weighted boxes fusion: Ensembling boxes from different object detection models},
  author  = {Solovyev, Roman and Wang, Weimin and Gabruseva, Tatiana},
  journal = {Image and Vision Computing},
  year    = {2021},
  volume  = {107},
  pages   = {104117},
  doi     = {10.1016/j.imavis.2021.104117},
  url     = {https://doi.org/10.1016/j.imavis.2021.104117}
}

@inproceedings{Bodla2017SoftNMS,
  title     = {Soft-{NMS} -- Improving Object Detection With One Line of Code},
  author    = {Bodla, Navaneeth and Singh, Bharat and Chellappa, Rama and Davis, Larry S.},
  booktitle = {Proceedings of the IEEE International Conference on Computer Vision},
  year      = {2017},
  url       = {http://arxiv.org/abs/1704.04503}
}

@inproceedings{Wang2020FrustratinglySimpleFSOD,
  title     = {Frustratingly Simple Few-Shot Object Detection},
  author    = {Wang, Xin and Huang, Thomas and Gonzalez, Joseph and Darrell, Trevor and Yu, Fisher},
  booktitle = {Proceedings of the 37th International Conference on Machine Learning},
  pages     = {9919--9928},
  year      = {2020},
  volume    = {119},
  series    = {Proceedings of Machine Learning Research},
  publisher = {PMLR},
  url       = {https://proceedings.mlr.press/v119/wang20j.html}
}

@inproceedings{Ho2024LongTailedAD,
  title     = {Long-Tailed Anomaly Detection with Learnable Class Names},
  author    = {Ho, Chih-Hui and Peng, Kuan-Chuan and Vasconcelos, Nuno},
  booktitle = {Proceedings of the IEEE/CVF Conference on Computer Vision and Pattern Recognition},
  year      = {2024},
  pages     = {12435--12446},
  url       = {https://openaccess.thecvf.com/content/CVPR2024/html/Ho_Long-Tailed_Anomaly_Detection_with_Learnable_Class_Names_CVPR_2024_paper.html}
}

@article{Huseljic2024DetectorCalibration,
  title   = {Systematic Evaluation of Uncertainty Calibration in Pretrained Object Detectors},
  author  = {Huseljic, Denis and Herde, Marek and Hahn, Paul and M{\"u}jde, Mehmet and Sick, Bernhard},
  journal = {International Journal of Computer Vision},
  year    = {2025},
  volume  = {133},
  pages   = {1033--1047},
  doi     = {10.1007/s11263-024-02219-z},
  url     = {https://link.springer.com/article/10.1007/s11263-024-02219-z}
}

@inproceedings{Kuppers2024CalibrationDetection,
  title   = {Beyond Classification: Definition and Density-based Estimation of Calibration in Object Detection},
  author  = {Teodora Popordanoska and Aleksei Tiulpin and Matthew B. Blaschko},
  booktitle = {2024 IEEE/CVF Winter Conference on Applications of Computer Vision (WACV)},
  year    = {2024},
  doi     = {10.1109/wacv57701.2024.00064},
  url     = {https://doi.org/10.1109/wacv57701.2024.00064}
}

@inproceedings{Kuzucu2024DetectorCalibration,
  title   = {On Calibration of Object Detectors: Pitfalls, Evaluation and Baselines},
  author  = {Selim Kuzucu and Kemal Oksuz and Jonathan Sadeghi and Puneet K. Dokania},
  booktitle = {Lecture notes in computer science},
  year    = {2024},
  doi     = {10.1007/978-3-031-72664-4\_11},
  url     = {https://doi.org/10.1007/978-3-031-72664-4\_11}
}

@inproceedings{DCSFKD2025,
  title   = {DCSF-KD: Dynamic Channel-wise Spatial Feature Knowledge Distillation for Object Detection},
  author  = {Tao Dai and Lin Yang and Hang Guo and Jinbao Wang and Zexuan Zhu},
  booktitle = {Proceedings of the AAAI Conference on Artificial Intelligence},
  year    = {2025},
  doi     = {10.1609/aaai.v39i3.32266},
  url     = {https://doi.org/10.1609/aaai.v39i3.32266}
}

@article{RealtimeRoadMobile2025,
  title   = {Lightweight deep learning for real-time road distress detection on mobile devices},
  author  = {Yuanyuan Hu and Ning Chen and Yue Hou and Xingshi Lin and Baohong Jing and Pengfei Liu},
  journal = {Nature Communications},
  year    = {2025},
  doi     = {10.1038/s41467-025-59516-5},
  url     = {https://doi.org/10.1038/s41467-025-59516-5}
}

@article{EfficientCrackYOLOv82025,
  title   = {Efficient surface crack segmentation for industrial and civil applications based on an enhanced YOLOv8 model},
  author  = {Zeinab F. Elsharkawy and H. Kasban and Mohammed Y. Abbass},
  journal = {Journal of Big Data},
  year    = {2025},
  doi     = {10.1186/s40537-025-01065-1},
  url     = {https://doi.org/10.1186/s40537-025-01065-1}
}

@article{PotholePointCloudFusion2025,
  title   = {YOLOv8 and point cloud fusion for enhanced road pothole detection and quantification},
  author  = {Junkui Zhong and Deyi Kong and Yuliang Wei and Bin Pan},
  journal = {Scientific Reports},
  year    = {2025},
  doi     = {10.1038/s41598-025-94993-0},
  url     = {https://doi.org/10.1038/s41598-025-94993-0}
}

@article{FD2YOLOCrack2025,
  title   = {FD2-YOLO: A Frequency-Domain Dual-Stream Network Based on YOLO for Crack Detection},
  author  = {Junwen Zhu and Jinbao Sheng and Qian Cai},
  journal = {Sensors},
  year    = {2025},
  doi     = {10.3390/s25113427},
  url     = {https://doi.org/10.3390/s25113427}
}

@article{YOLO11nOcclusionPavement2025,
  title   = {A Lightweight YOLOv11n-Based Framework for Highway Pavement Distress Detection Under Occlusion Conditions},
  author  = {Wei Li and Xiao Luo and Changhao Yang and Miao Fang and Weiyu Liu},
  journal = {Applied Sciences},
  year    = {2025},
  doi     = {10.3390/app15179664},
  url     = {https://doi.org/10.3390/app15179664}
}

@inproceedings{SimLTD2025,
  title   = {SimLTD: Simple Supervised and Semi-Supervised Long-Tailed Object Detection},
  author  = {Phi Vu Tran},
  booktitle = {2025 IEEE/CVF Conference on Computer Vision and Pattern Recognition (CVPR)},
  year    = {2025},
  doi     = {10.1109/cvpr52734.2025.00440},
  url     = {https://doi.org/10.1109/cvpr52734.2025.00440}
}

@inproceedings{FractalCalibration2025,
  title   = {Fractal Calibration for long-tailed object detection},
  author  = {Konstantinos Panagiotis Alexandridis and Ismail Elezi and Jiankang Deng and Anh Nguyen and Shan Luo},
  booktitle = {2025 IEEE/CVF Conference on Computer Vision and Pattern Recognition (CVPR)},
  year    = {2025},
  doi     = {10.1109/cvpr52734.2025.01410},
  url     = {https://doi.org/10.1109/cvpr52734.2025.01410}
}

@article{DHCNet2025,
  title   = {DHC-Net: A Remote Sensing Object Detection Under Haze and Class Imbalance},
  author  = {Yuanyuan Li and Qiying Ling and Yiyao An and Hongpeng Yin and Xinbo Gao and Zhiqin Zhu and Peng Han},
  journal = {IEEE Transactions on Geoscience and Remote Sensing},
  year    = {2025},
  doi     = {10.1109/tgrs.2025.3551286},
  url     = {https://doi.org/10.1109/tgrs.2025.3551286}
}

@inproceedings{DuMoLongTail2025,
  title   = {DuMo: A Dual-Model Framework for Effective Long-tailed Object Detection},
  author  = {Chenbo Zhang and Yinglu Zhang and Jihong Guan and Shuigeng Zhou},
  booktitle = {2025 IEEE International Conference on Multimedia and Expo (ICME)},
  year    = {2025},
  doi     = {10.1109/icme59968.2025.11209185},
  url     = {https://doi.org/10.1109/icme59968.2025.11209185}
}

@inproceedings{BalancedGroupSoftmax2025,
  title   = {Improving Long-Tailed Object Detection with Balanced Group Softmax and Metric Learning},
  author  = {Satyam Gaba},
  booktitle = {2025 19th International Conference on Semantic Computing (ICSC)},
  year    = {2025},
  doi     = {10.1109/icsc64641.2025.00051},
  url     = {https://doi.org/10.1109/icsc64641.2025.00051}
}

@inproceedings{FisheyeYoloWBF2025,
  title   = {Enhanced Fisheye Object Detection via YOLO Ensemble Learning and Weighted Box Fusion},
  author  = {Chun-Ming Tsai and Li-Li Wu and Tsung-Yu Chen},
  booktitle = {2025 IEEE/CVF International Conference on Computer Vision Workshops (ICCVW)},
  year    = {2025},
  doi     = {10.1109/iccvw69036.2025.00552},
  url     = {https://doi.org/10.1109/iccvw69036.2025.00552}
}

@article{SelectiveBoxFusionDefect2025,
  title   = {Defect detection in EBSM components through selective box fusion of modern object detection},
  author  = {Rui Han and Chenwei Wang and Yuzhong Wang and Yihui Zhang and Wenhua Guo and Yanyang Zi and Jiyuan Zhao},
  journal = {Scientific Reports},
  year    = {2025},
  doi     = {10.1038/s41598-025-96406-8},
  url     = {https://doi.org/10.1038/s41598-025-96406-8}
}

@article{SmallObjectHybridMetric2025,
  title   = {Small object detection using hybrid evaluation metric with context decoupling},
  author  = {Kang Tong and Yiquan Wu},
  journal = {Multimedia Systems},
  year    = {2025},
  doi     = {10.1007/s00530-025-01738-0},
  url     = {https://doi.org/10.1007/s00530-025-01738-0}
}

@article{DiffusionKDDetection2025,
  title   = {Knowledge distillation for object detection with diffusion model},
  author  = {Yi Zhang and Junzong Long and Chunrui Li},
  journal = {Neurocomputing},
  year    = {2025},
  doi     = {10.1016/j.neucom.2025.130019},
  url     = {https://doi.org/10.1016/j.neucom.2025.130019}
}

@inproceedings{MultiscaleAttentionKD2025,
  title   = {Multiscale Attention Knowledge Distillation for Object Detection},
  author  = {Fengshuo Zhang},
  booktitle = {2025 International Joint Conference on Neural Networks (IJCNN)},
  year    = {2025},
  doi     = {10.1109/ijcnn64981.2025.11227248},
  url     = {https://doi.org/10.1109/ijcnn64981.2025.11227248}
}

@article{BiasFairnessOpenImages2024,
  title   = {Bias Detection and Fairness Analysis in Object Detection and Image Classification Using Open Images V7},
  author  = {Farjana Yesmin},
  journal = {SSRN Electronic Journal},
  year    = {2024},
  doi     = {10.2139/ssrn.5018209},
  url     = {https://doi.org/10.2139/ssrn.5018209}
}

@article{MetalDefectFairness2024,
  title   = {Evaluating Generalization, Bias, and Fairness in Deep Learning for Metal Surface Defect Detection: A Comparative Study},
  author  = {Singharat Rattanaphan and Alexia Briassouli},
  journal = {Processes},
  year    = {2024},
  doi     = {10.3390/pr12030456},
  url     = {https://doi.org/10.3390/pr12030456}
}

@incollection{RealTimeBenchmarkOD2024,
  title   = {Real-Time Benchmark Datasets for Object Detection},
  author  = {Mrinal Kanti Bhowmik},
  booktitle = {Computer Vision},
  year    = {2024},
  doi     = {10.1201/9781003432036-4},
  url     = {https://doi.org/10.1201/9781003432036-4}
}

@incollection{CEDPYOLOPRCV2025,
  title   = {CEDP-YOLO: UAV Object Detection Based on Context Enhancement and Dynamic Perception},
  author  = {Zhenhui Ding and Zengbin Zhang and Mao Yuan and Guangxiao Ma and Guohua Lv},
  booktitle = {Lecture Notes in Computer Science},
  year    = {2025},
  doi     = {10.1007/978-981-97-8502-5\_25},
  url     = {https://doi.org/10.1007/978-981-97-8502-5\_25}
}

@incollection{YOLOFDAPRCV2026,
  title   = {YOLO-FDA: Integrating Hierarchical Attention and Detail Enhancement for Surface Defect Detection},
  author  = {Jiawei Hu},
  booktitle = {Lecture Notes in Computer Science},
  year    = {2026},
  doi     = {10.1007/978-981-95-5758-5\_15},
  url     = {https://doi.org/10.1007/978-981-95-5758-5\_15}
}
\endgroup

\end{document}